\title{Physarum Powered Differentiable Linear Programming Layers and Applications}
\author {
        Zihang Meng, \textsuperscript{\rm 1} 
        Sathya N. Ravi, \textsuperscript{\rm 2}
                Vikas Singh \textsuperscript{\rm 1} \\
}
\newtheorem{theorem}{Theorem}
\newtheorem{assume}{Assumption}
\newtheorem{remark}{Remark}
\newtheorem{obsn}[theorem]{Observation}
\newtheorem*{theorem*}{Theorem}
\begin{document}

\maketitle

\begin{abstract}
  Consider a learning algorithm, which involves an internal call to an optimization routine such as a generalized eigenvalue problem, a cone programming problem or even sorting. Integrating such a method
  as a layer(s) within a trainable deep neural network (DNN) in an efficient
  and numerically stable way is not straightforward -- for instance, 
only recently, strategies have emerged for eigendecomposition and differentiable 
sorting. 
We propose an efficient and differentiable solver for general  linear  
programming problems  which  can be  used in a plug and play manner within DNNs as  a layer. 
Our development is inspired by a fascinating but not widely used
link between dynamics of slime mold (physarum) and optimization schemes such as steepest descent. 
We describe our development and show the use of our solver in a video
segmentation task and meta-learning for few-shot learning.  
We review the existing results and provide a technical analysis describing 
its applicability for our use cases.  
Our solver performs comparably with a customized projected gradient descent method 
on the first task and outperforms the differentiable CVXPY-SCS solver  on  the  second  task. 
Experiments  show  that our  solver converges quickly 
without the need for a feasible initial point. Our proposal is easy to implement
and can easily serve as layers whenever a learning procedure needs a fast approximate solution to a LP,
within a larger network. 
\end{abstract}

\section{Introduction}

Many problems in machine learning can be 
expressed as, or otherwise involve as a sub-routine, the minimization 
of a linear function constrained by a set of linear 
equality and inequality constraints, also known as a Linear Program (LP). 
LPs can be solved efficiently even when the problem sizes 
are large, and industrial strength solvers are readily available.
Over the last twenty 
years, direct applications of LPs in 
machine learning and computer vision include 
image reconstruction \cite{tsuda2004image}, denoising \cite{tavakoli2012image}, deconvolution \cite{ahmed2013blind}
surface reconstruction \cite{grady2008minimal}, graphical models \cite{ravikumar2006quadratic}, scene/view understanding \cite{mauro2014integer}, and numerous others.  
While the use of specialized solvers 
based on combinatorial optimization rather than 
the direct use of a simplex or interior point method has 
been more common in large scale settings (e.g., in vision), 
there are also numerous 
instances where LP duality inspired schemes (such as primal-dual methods) 
have led to competitive and/or more general solution schemes. 

{\bf Are LPs needed in modern learning problems?} Within the last decade, deep neural networks 
have come to dominate many AI problems. 
So, an LP (or other well-studied numerical algorithms/methods) will rarely provide an {\em end-to-end} model
for a practical problem. 
Nonetheless, similar to how various 
linear algebra routines such as eigendecomposition still play a key 
role as a sub-routine in modern learning tasks, 
{\em LP type models are still prevalent} 
in numerous pipelines in machine learning. 
For instance, 
consider a 
representation learner defined by taking our favorite off-the-shelf  
architecture where the representations are used to setup 
the cost for a ``matching'' problem (commonly written as a LP). Then, 
once a matching problem is solved, we route 
that output to pass through downstream layers 
and finally the loss is evaluated. 
Alternatively, consider the case where we must reason about (or group)  a set of 
low-level primitives, via solving an assignment problem, 
to define a higher order semantic construct as is 
often the case in capsule networks \cite{sabour2017dynamic}. 
Or, our architecture involves estimating the Optimal transport 
distance \cite{salimans2018improving,bousquet2017optimal,sanjabi2018convergence} where the cost matrix depends on
the outputs of previous layers in a network. Such a module 
(rather, its approximations) lie at the 
heart of many popular methods for training generative 
adversarial networks (GANs) \cite{arjovsky2017wasserstein}. 
Separately, confidence calibration is becoming 
an important issue in deep learning \cite{guo2017calibration,nixon2019measuring};
 several forms of calibration involve solutions to LPs. 
One approach for dealing 
with such a ``in the loop'' algorithmic procedure \cite{amos2017optnet}
is to treat it as a general two-level optimization.
When the feasible set of the LP is a box/simplex or can be represented using
ratio type functions \cite{Ravi_Venkatesh_Fung_Singh_2020}, it is possible to unroll the
optimization with some careful modifications of existing subroutines such as projections. 
This is not as straightforward in general 
where one must also concurrently perform projections on to 
the feasible set. An ideal solution would be a LP module that could be used anywhere in our architecture: one 
which takes its inputs from the previous layers and feeds into subsequent layers in the network. 

{\bf Contributions: Backpropagation through LP.}
The key difficulty in solving LPs within a deep 
network is efficiently 
minimizing a loss $\ell(\cdot)$ which depends on a 
parameter derived from 
the solution of a LP -- we must backpropagate 
{\em through} the LP solver to update the network weights. This problem is, of course, not unique 
to LPs but has been recently encountered in inserting various optimization 
modules as layers in a neural network, e.g., reverse 
mode differentiation through an ODE solver \cite{chen2018neural}, 
differentiable sorting \cite{mena2018learning} and formulating 
quadratic \cite{amos2017optnet} or cone programs as neural network layers 
\cite{agrawal2019differentiable}. Our inspiration is a beautiful link \cite{straszak2015natural,johannson2012slime}
between dynamics of a 
slime mold (physarum polycephalum) and mathematical 
optimization that 
has not received
attention in deep learning. Exploiting the ideas 
in \cite{straszak2015natural,johannson2012slime} with certain adjustments leads to a ``LP module/layers'' called $\gamma-$AuxPD that can be incorporated 
within various architectures. Specifically, our main result in Thm. \ref{lem:bm} together with the results in \cite{straszak2015natural,johannson2012slime} shows that $\gamma-$AuxPD can solve a much larger class of LPs. Some immediate advantages of $\gamma-$AuxPD include 
\begin{inparaenum}[\bfseries (a)]
\item simple plug-and-play differentiable LP layers; 
\item converges fast; 
\item does not need a feasible solution as an initialization
\item very easy to integrate or implement.
\end{inparaenum}
We demonstrate how these properties provide a practical and easily usable module for solving LPs.

\subsection{Related Works}
The challenge in solving an optimization 
 module {\em within} a deep network often boils down to the specific steps and the end-goal of that module itself. In some cases (unconstrained minimization of simple functions), the update 
 steps can be analytically calculated \cite{dave2019learning,schmidt2014shrinkage}. 
 For more general unconstrained objectives, we must perform 
 unrolled gradient descent during training  \cite{amos2017input,metz2016unrolled,goodfellow2013multi}.  When 
 the optimization involves certain constraints, one must 
 extend the frameworks to use iterative schemes incorporating projection operators, that repeatedly project the solution into a subspace of feasible solutions \cite{zeng2019dmm}. Since such operators are difficult to differentiate in general, it is hard to incorporate them directly outside of special cases. To this end, \cite{amos2017input} dealt with constraints by incorporating them in the Lagrangian and using the KKT conditions.  For combinatorial problems with linear objectives, \cite{vlastelica2019differentiation}
   implemented an efficient backward pass through blackbox implementations of combinatorial solvers and \cite{berthet2020learning} recently
   reported success with end-to-end differentiable learning with blackbox optimization modules.
 In other cases, when there is no associated objective function, 
 some authors have reported some success with using  reparameterizations 
 for homogeneous constraints 
 \cite{frerix2019linear}, adapting 
 Krylov subspace methods \cite{de2017krylov},
 conditional gradient schemes \cite{ravi2019explicitly}
 and so on. 
 
Our goal here is to incorporate an LP as a module within the network, 
and is related in principle to some other works  that incorporate 
optimization routines of different forms within a deep model 
which we briefly review here. 
In \cite{belanger2016structured}, the authors 
proposed a novel structured prediction network by solving an energy minimization problem within the network whereas \cite{mensch2018differentiable} utilized differentiable dynamic programming for structured prediction and attention. 
To stabilize the training of Generative Adversarial Networks (GANs), 
\cite{metz2016unrolled} defined the generator objective with respect to an unrolled optimization of the discriminator. 
Recently, it has been shown that incorporating concepts such as fairness \cite{sattigeri2018fairness} and  verification \cite{liu2019algorithms} within deep networks also requires solving an optimization model internally. 
Closely related to our work is OptNet \cite{amos2017optnet}, 
which showed how to  design a network architecture that 
integrates constrained Quadratic Programming (QP) as a differentiable layers. While the method is not directly designed to
work for linear programs (quadratic term needs to be positive definite), in experiments, one may add a suitable quadratic term as a regularization.
More  recently, \cite{agrawal2019differentiable} introduces a package for differentiable constrained convex programming. Specifically, it utilizes a solver called SCS implemented in CVXPY package \cite{ocpb:16,scs}, which we denote as CVXPY-SCS in our paper.

\section{Why Physarum Dynamics?}
Consider a Linear Program (LP) in the standard form given by, 
\begin{align}
\min_{x\in\mathbb{R}^n} \quad c^Tx \quad  \textrm{s.t.} \quad Ax=b, x\geq 0 \label{lp_standard}
\end{align}
where $A\in \mathbb{R}^{m\times n},c\in \mathbb{R}^{n}_{>0}, b\in \mathbb{R}^m$. In \eqref{lp_standard},  $c$ is called the {\em cost vector} (we explain how to deal with nonpositive $c$ in Section \ref{section_auxiliary}), and the intersection of the linear equalities $Ax=b$, and inequalities $x\geq 0$ is called the  {\em feasible set} denoted by $P$. Now, we briefly discuss two main families of algorithms that are often used to solve LPs of the form \eqref{lp_standard}.

\subsection{Simplex Algorithms: The Workhorse}
Recall that by the Minkowski-Weyl theorem,  the feasible set $P$ can be decomposed into a finite set of extreme points and rays.  A family of algorithms called {\em Simplex} exploits this decomposition of $P$ to solve LPs. 
Intuitively, the Simplex method is based on the principle that if there exists a solution to a LP, then there is at least one vertex (or an extreme point) of $P$ that is optimal. In fact, Simplex algorithms can be seen as {\bf First Order} methods with a careful choice of update direction so as to move along the edges of $P$. There are three key properties of simplex algorithms to solve LP \eqref{lp_standard}: \begin{enumerate*}[label=(\roman*)]
    \item {\em Good:} We can obtain {\em exact} solutions in finite number of iterations;
    \item {\em Bad:} The worst case complexity is exponential in $m$ (or $n$); and 
    \item {\em Highly undesirable:} The update directions are computed by forming the {\em basis} matrix making the algorithm {\em combinatorial/nondifferentiable} in nature.
\end{enumerate*} 
\begin{remark}
It may not be possible to use a differentiable update rule since it would require an enumeration of vertices of $P$ -- exponential in dimensions $n$ \cite{barvinok2013bound}.
\end{remark}

\subsection{Interior Point Algorithms: Trading Exactness for Efficiency}
Asking for {exact} solutions of LP \eqref{lp_standard} may be a stringent requirement. An approximate solution of LP \eqref{lp_standard} can be computed using a different family of methods called {\em Interior Point Method} (IPM) in $ O(\sqrt{\max(m,n})$ \cite{wright1997primal}. Intuitively, while the iterates of a simplex method proceed along the edges of $P$, an IPM passes through the {\em interior} of this polyhedron. In particular, IPMs are {\bf second order} algorithms since they directly solve the system of nonlinear equations derived from KKT conditions by applying variants of Newton's method \cite{wright1997primal}.  As with Simplex methods, we point out to three key properties of IPM: \begin{enumerate*}[label=(\roman*)]
    \item {\em Good:} IPM based algorithms can efficiently solve LP \eqref{lp_standard} in theory \cite{lee2014path,gondzio2012interior};
    \item {\em Bad:} IPMs {\em need} to be started from a feasible point although there are special infeasible start IPMs \cite{roos2006full}; and 
    \item {\em Bad:} In practice, IPMs are faster than Simplex Method {\em only} when $m$, and $n$ are large, e.g., millions \cite{Cui2019}.
\end{enumerate*} \begin{remark}
Even if we can find a feasible point efficiently, it is not easy to {\em warm start} IPM methods due to the high sensitivity of the central path equation \cite{John2008}. In contrast, first order methods like Simplex can be easily warm started \cite{arsham1997classroom}.
\end{remark}

\subsection{Physarum Dynamics: Best of Both Worlds?}
The term {\em Physarum Dynamics (PD)} refers to the movement of a slime mold called {\it Physarum polycephalum},
and is  studied in mathematical biology for its inherent computational nature and properties that closely mirror mathematical optimization. For example, in an interesting result, \cite{toshiyuki2000maze} showed that the slime mold can solve a shortest path problem on a maze. Further, the temporal evolution of Physarum has been used to 
learn robust network design \cite{tero2007mathematical,johannson2012slime}, by connecting it to a broad class of dynamical systems for basic computational problems such as shortest paths and LPs. 
In \cite{straszak2015natural}, the authors studied the convergence properties of PD for LPs, and showed that these steps surprisingly mimic a steepest-descent type algorithm on a certain Riemannian manifold. While these interesting links have not 
been explored in AI/deep learning, we find that the simplicity of these 
dynamics and its mathematical behavior provide 
an excellent approach towards our key goal.



\label{physarum}

We make the following mild assumption about LPs \eqref{lp_standard}  that we consider here \begin{assume}[Feasibility]
The feasible set $P:=\{x:Ax=b,x\geq0\}$ of \eqref{lp_standard} is nonempty.\label{assm:feas}
\end{assume}
For the  applications considered in this paper, Assumption \ref{assm:feas} is always satisfied.  We now describe the PD for solving  LPs and illustrate the similarities and differences between PD and other methods.

Consider any vector $x\in \mathbb{R}^n$ with $x>0$ and let $W\in \mathbb{R}^{n\times n}$ be the diagonal matrix with entries $\frac{x_i}{c_i},i=1,2,...,n$. Let $L=AWA^T$ and  $p\in \mathbb{R}^m$ is the solution to the linear system $Lp=b$. Let $q=WA^Tp$. The PD for a LP (e.g., in \eqref{lp_standard}) given by $(A,b,c)$ is defined as,
\begin{align}
    \frac{dx_i(t)}{dt}=q_i(t)-x_i(t),\quad  i=1,2,\dots,n.
\end{align}
Equivalently, using the definition of $q$ we can write the {\em continuous} time PD compactly as,
\begin{align}
    \dot{x}=W(A^TL^{-1}b-c).\label{eq:contpd}
\end{align}  Theorem 1 and 2 in \cite{straszak2015natural} guarantee that \eqref{eq:contpd} converges to an $\epsilon-$approximate solution efficiently with no extra conditions and its discretization converges as long as the positive step size is small enough.
\begin{remark}[PD vs IPM]
  Similar to IPM, PD requires us to compute a full linear system solve at each iteration. However, note that the matrix $L$ associated with linear system in PD is completely different from the KKT matrix that is used in IPM. Moreover, it turns out that unlike most IPM,  PD can be started with an {\bf infeasible starting point}. Note that PD only requires the initial point to satisfy $As=b$ which corresponds to solving ordinary least squares  which can be easily done using any iterative method like Gradient Descent.
\end{remark}
\begin{remark}[PD vs Simplex] Similar to Simplex,  PD corresponds to a gradient, and therefore is a {\em first order} method. The crucial difference between the two methods, is that the metric used in  PD is {\bf geodesic} whereas Simplex uses the Euclidean metric. Intuitively, using the geodesic metric of $P$ instead of the Euclidean metric can vastly improve the convergence speed since the performance of first order methods is dependent on the choice of coordinate system \cite{yang1998efficiency,zhang2016first}.
\end{remark}
 {\bf When is PD efficient?} 
 As we will see shortly in Section \ref{Applications}, in the two applications that we consider in this paper, the sub-determinant of $A$ is {\em provably} small -- constant or {at most quadratic in $m,n$}. In fact, when $A$ is a node incidence matrix, PD computes the shortest path, and is known to converge extremely fast. In order to be able to use PD for a wider range of problems, we propose
 a simple modification described below. Note that since many of the vision problems
 require auxiliary/slack variables in their LP (re)formulation, the convergence results in \cite{straszak2015natural} do
 {\em not} directly apply since $L$ in \eqref{eq:contpd} is {\em not} invertible. Next,
 we discuss how to deal with noninvertibility of $L$ using our proposed algorithm called $\gamma-$AuxPD (in Algorithm \ref{alg}). 
 
\section{Dealing with Auxiliary Variables using $\gamma-$AuxPD}
\label{section_auxiliary}
In the above description, we assume that $c\in\mathbb{R}^n_{>0}$. We now address the case where $c_i=0$ under the following assumption on the feasible set $P$ of LP \eqref{lp_standard}: \begin{assume}[Bounded]
The feasible set $P\subseteq [0,M]^n$, i.e., $x\in P \implies x_i\leq M~\forall~i\in[n].$\label{assm:bdd}
\end{assume} Intuitively, if $P$ is bounded, we may expect that the optimal solution set to be invariant under a sufficiently small perturbation of the cost vector along {\em any} direction.  The following observation from \cite{johannson2012slime} shows that this is indeed possible as long as $P$ is finitely generated:
\begin{obsn}[{{\cite{johannson2012slime}}}]\label{obsn1}
Let $\epsilon>0$ be the given desired level of accuracy, and say  $c_i=0$ for some $i\in [n]$. Recall that our goal is to find a point $\hat{x}\in P$ such that $c^T\hat{x}-c^Tx^*\leq \epsilon$ where $x^*$ is the optimal solution to the LP \eqref{lp_standard}. Consider the $\gamma-$perturbed LP given by $\{A,b,\hat{c}\}$, where $\hat{c}_i=c_i$ if $c_i>0$ and $\hat{c}=\gamma$ if $c_i=0$. Let $x_2$ be an extreme point that achieves the second lowest cost to LP \eqref{lp_standard}. Now it is easy to see that if  $\gamma<\frac{\delta}{n\cdot M}$ where $\delta=c^Tx_2-c^Tx^*$,  then $x^*$ is  an approximate solution of $\{A,b,\hat{c}\}$. Hence, it suffices to solve the $\gamma-$perturbed LP. 
\end{obsn}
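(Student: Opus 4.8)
The plan is to exploit the fact that, under Assumptions \ref{assm:feas} and \ref{assm:bdd}, the feasible set $P$ is a nonempty bounded polyhedron, hence by the Minkowski--Weyl theorem a polytope with a \emph{finite} set of extreme points whose location depends only on $(A,b)$ and not on the cost vector. Since a linear objective over a polytope attains its minimum at an extreme point, both the original LP $\{A,b,c\}$ and the perturbed LP $\{A,b,\hat c\}$ are optimized at extreme points drawn from this common finite set. The whole argument then reduces to showing that the perturbation of the cost is too small to promote any strictly-suboptimal extreme point ahead of $x^*$.

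First I would control the size of the perturbation uniformly over $P$. For every $x\in P$ we have $\hat c^T x - c^T x = \gamma\sum_{i:c_i=0}x_i$, and since there are at most $n$ such indices with each coordinate satisfying $x_i\le M$ by Assumption \ref{assm:bdd}, this difference lies in $[0,\gamma n M]$. In particular $c^T x \le \hat c^T x \le c^T x + \gamma n M$ for all feasible $x$, and note that $\hat c_i \ge c_i$ holds coordinatewise by construction.

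Next I would compare $x^*$ against an arbitrary extreme point $y$ with $c^T y > c^T x^*$ under the perturbed cost. On one hand $\hat c^T x^* \le c^T x^* + \gamma n M$ by the uniform bound; on the other hand, since $\hat c_i\ge c_i$ for every $i$ and $y\ge 0$, we get $\hat c^T y \ge c^T y \ge c^T x_2 = c^T x^* + \delta$, where the middle inequality uses that $c^T x_2$ is the lowest extreme-point cost exceeding $c^T x^*$. Chaining these with the hypothesis $\gamma < \delta/(nM)$ gives $\hat c^T x^* \le c^T x^* + \gamma n M < c^T x^* + \delta \le \hat c^T y$, so $x^*$ strictly beats every strictly-suboptimal extreme point in the perturbed objective. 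Because the perturbed optimum is itself attained at an extreme point, it must coincide with $x^*$; and since $\hat c>0$, this perturbed LP is amenable to Physarum Dynamics, so solving it returns $x^*$. This is exactly what justifies the concluding assertion that it suffices to solve the $\gamma$-perturbed LP.

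The main obstacle I anticipate is \emph{degeneracy}: the argument silently requires that the optimal cost be attained at a single extreme point, so that $\delta = c^T x_2 - c^T x^* > 0$ is a genuine positive gap. If several extreme points achieve the minimum, then ``the second lowest cost'' collapses to $c^T x^*$, forcing $\delta = 0$ and leaving no admissible $\gamma$. I would resolve this by reading $x_2$ as an extreme point achieving the smallest cost \emph{strictly} exceeding $c^T x^*$; the same chain then shows the perturbed optimum lies within the set of original optima, which is all that is needed to certify an $\epsilon$-approximate cost. A secondary point to state carefully is that the discrete PD solver returns only an $\epsilon$-accurate iterate rather than $x^*$ exactly, so I would absorb this final discretization error into the target accuracy $\epsilon$ using the uniform two-sided bound from the first step.
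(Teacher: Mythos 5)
Your proof is correct and takes essentially the same route as the paper, whose entire justification is this extreme-point gap argument hidden behind ``it is easy to see'': the perturbation raises the cost of any feasible point by at most $\gamma n M < \delta$, so no strictly suboptimal vertex of the polytope $P$ (nonempty and bounded by Assumptions \ref{assm:feas} and \ref{assm:bdd}) can overtake $x^*$ under $\hat{c}$. Your two refinements --- reading $x_2$ as the extreme point of smallest cost \emph{strictly} exceeding $c^Tx^*$ to handle degenerate optima, and absorbing the PD discretization error into the target accuracy $\epsilon$ --- are sensible tightenings of details the paper glosses over, not a different approach.
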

With these modifications, we present our discretized $\gamma-$AuxPD algorithm \ref{alg} that solves a slightly perturbed version of the given LP. 
\begin{remark}
     Note that $\gamma-$perturbation argument does not work for any $P$ and $c$ since LP \eqref{lp_standard} may be unbounded or have {\em no} extreme points. 
\end{remark}
Observation \ref{obsn1} can be readily used for computational purposes by performing a binary search  over $\gamma$
if we can obtain a finite upper bound $\gamma_u$. Furthermore, if $\gamma_u$ is a polynomial function of the input parameters $m,n$ of LP, then Observation \ref{obsn1} implies that $\gamma-$AuxPD algorithm is also efficient.    Fortunately, for applications that satisfy the bounded assumption \ref{assm:bdd}, our Theorem \ref{lem:bm} shows that a {\em tight} upper bound $\gamma_u$ on $\gamma_P$ can be provided in terms of $M$ (diameter of $P$).

{\bf Implementation.} Under Assumption \ref{assm:bdd}, negative costs can be handled by replacing $x_i=-y_i$ whenever $c_i<0$, or in other words, by flipping the coordinate axis of coordinates with negative costs, which has been noticed in \cite{johannson2012slime}.
Since we use an iterative linear system solver to compute $q$, we project $x$ on to $\mathbb{R}_{\geq\epsilon}$
after each iteration: this corresponds to a simple clamping operation.

\begin{algorithm}[!bt]
 \SetAlgoLined
 \textbf{Input:} LP problem parameters $A,b,c$, initial point $x_0$, Max iteration number $K$, step size $h$, accuracy level  $\epsilon$, approximate diameter $\gamma_P$   \\
 Set $x_s \leftarrow x_0$ if $x_0$ is provided else $\text{rand}([n],(0,1))$\\
 Perturb cost $c \leftarrow c + \gamma_P\bf{1}_0$ where $\bf{1}_0$ is the binary vector with unit entry on the indices $i$ with $c_i=0$  \\
\For{$i=1$ to $K$}{
 Set: $W \leftarrow \text{diag}(x_s/c)$\\
 Compute: $L \leftarrow AWA^T$\\
 Compute: $p \leftarrow L^{-1}b$ using iterative solvers\\
 Set: $q \leftarrow WA^Tp$\\
 Update: $x_s \leftarrow (1-h)x_s + h q$\\
 Project onto $ \mathbb{R}_{\geq\epsilon}$: $x_s \leftarrow  \max\left(x_s,\epsilon\right)$
 } 
 \textbf{Return:} $x_s$ 
 \caption{$\gamma-$AuxPD Layer}
 \label{alg}
\end{algorithm}

\section{Analysis of Some Testbeds for $\gamma-$AuxPD: Bipartite Matching and SVMs}
In order to illustrate the potential of the $\gamma-$AuxPD layer (Alg. \ref{alg}), we consider two classes of LPs 
common in
a number of applications and show that they can be solved using $\gamma-$AuxPD. These two 
classes of LPs are chosen because they link nicely to interesting problems
involving deep neural networks which we study in \S\ref{Applications}.

\begin{table}[b]

\centering
\resizebox{1.0\columnwidth}{!}{
\smallskip
\begin{tabular}{l  | c c c |c c c }
\hline
 & \multicolumn{3}{c}{\bf $\gamma-$AuxPD}& \multicolumn{3}{c}{\bf PGD-Dykstra}  \\
\hline
  Iter. \#   & $10$  &  $50$  &  $100$ & $10$   & $50$     & $100$   \\
 Proj. \#   & $NA$  &  $NA$  &  $NA$ & $5$   & $10$     & $50$  \\
Objective   & $0.100$  &  $0.098$  &  $0.099$ & $0.137$   & $0.121$     & $0.120$  \\
Time (s)   & $0.016$  &  $0.040$  &  $0.071$ & $0.016$   & $0.146$     & $0.498$   \\
\hline
\end{tabular}
}
\caption{Results on solving random matching problems.}
\label{table_PGD}
\end{table}

\subsection{Bipartite Matching using Physarum Dynamics}
\label{examples_matching}
Given two finite non-intersecting sets $I$, $J$ such that $ |I|=m$,  $|J|=n,n\ll m $, and a cost function $C:I\times J\to\mathbb{R}$, solving a minimum cost bipartite matching problem corresponds to finding a map $f:I\to J$ such that total cost  $\sum_iC(i,f(i))$ is minimized. If we represent $f$ using an assignment matrix $X\in \mathbb{R}^{n\times m}$, then a LP relaxation of the matching problem  can be written in  standard form \eqref{lp_standard} as,
\begin{align}
\min_{(X,s_m)\geq 0}~  & ~\mbox{tr}(CX^T) + \gamma{\bf 1}_m^Ts_m \nonumber \\
\textrm{s.t.}~ & ~X\boldsymbol{1}_m=1_n, ~ X^T{1_n} + s_m = {1_m} \label{eq:mgreatern}
\end{align}
where $C\in \mathbb{R}^{n\times m}$ is the cost matrix, $\boldsymbol{1}_d$ is the all-one vector in $d$ dimension, and $s_m\in R^m$ is the slack variable. 
\begin{remark}
Note that in LP \eqref{eq:mgreatern}, the slack variables $s_m$ impose the $m$ inequalities  $X^T1_n\leq 1_m$.
\end{remark}
The following theorem shows that the convergence rate of
$\gamma-$AuxPD applied to the bipartite matching in \eqref{eq:mgreatern} only has a dependence which is {\bf logarithmic} in $n$.
\begin{theorem}\label{lem:bm}
Assume we set $0<\gamma\leq \gamma_u$ such that $1/\gamma_u=\Theta(\sqrt{m})$. Then, our $\gamma-$AuxPD (Algorithm \ref{alg}) converges to an optimal solution to \eqref{eq:mgreatern} in $\Tilde{O}\left(\frac{m}{\epsilon^2}\right)$ iterations where $\Tilde{\mathcal{O}}$ hides the logarithmic factors in $m$ and $n$.
\end{theorem}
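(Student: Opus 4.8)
The plan is to reduce the statement to the discrete convergence guarantee for Physarum dynamics established in \cite{straszak2015natural} (their Theorems 1 and 2) and then to bound, \emph{specifically for the matching LP} \eqref{eq:mgreatern}, the three problem-dependent quantities that control that rate: the largest subdeterminant $D=\max\{|\det(A')|\}$, the cost sum $C_s=\sum_i \hat c_i$ of the perturbed instance, and the quality of the initialization (the constants $M,M_s$ that enter only through $\ln M+\ln M_s$). After fixing the step size, that guarantee scales as $\tilde{O}\!\left(D^2 C_s^2/\epsilon^2\right)$ up to the logarithmic initialization factors, so it suffices to show $D=1$, $C_s=\tilde{O}(\sqrt{m})$, and $\ln M+\ln M_s=\tilde{O}(1)$ for \eqref{eq:mgreatern}; everything then collapses to $\tilde{O}(m/\epsilon^2)$.

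First I would address the perturbation, which is the reason $\gamma-$AuxPD is needed at all: the slack variables $s_m$ carry zero cost, so $L=AWA^T$ is singular and the raw result of \cite{straszak2015natural} does not apply. Here the matching LP has a property that makes the $\gamma-$perturbation of Observation \ref{obsn1} essentially free. Taking $\mathbf{1}_m^T$ of the column constraints $X^T\mathbf{1}_n+s_m=\mathbf{1}_m$ and using the row constraints $X\mathbf{1}_m=\mathbf{1}_n$ gives $\mathbf{1}_m^T s_m = m-\mathbf{1}_n^T X\mathbf{1}_m = m-n$, i.e. $\mathbf{1}_m^T s_m$ is \emph{constant} on the feasible set $P$. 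Consequently the added term $\gamma\,\mathbf{1}_m^T s_m=\gamma(m-n)$ is a constant on $P$ and does not change the set of minimizers for \emph{any} $\gamma>0$; the perturbation is exact, and we are free to pick $\gamma=\gamma_u$ purely to optimize the convergence rate rather than to meet the stringent accuracy threshold $\gamma<\delta/(nM)$ of Observation \ref{obsn1}.

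Next I would bound $D$ and $C_s$. Writing \eqref{eq:mgreatern} in standard form, the columns of $A$ indexed by $X_{ij}$ are exactly the edge columns of the vertex--edge incidence matrix of the complete bipartite graph $K_{n,m}$ (each such column carries a single $1$ in the row block for $i$ and a single $1$ in the row block for $j$), while the columns indexed by $s_m$ form the block $\bigl(\begin{smallmatrix}0\\ I_m\end{smallmatrix}\bigr)$. Bipartite incidence matrices are totally unimodular, and appending an identity block preserves total unimodularity; hence every square subdeterminant of $A$ lies in $\{-1,0,1\}$ and $D=1$, removing the otherwise exponential determinant factor. For the cost sum, a vertex of $P$ satisfies $\|X\|_2^2=n$ and $\|s_m\|_2^2=m-n$, so the Euclidean diameter of $P$ is $\Theta(\sqrt{m})$, which is precisely the quantity that pins $1/\gamma_u=\Theta(\sqrt{m})$. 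With this choice the slack block contributes $m\gamma_u=\Theta(\sqrt{m})$ to $C_s$, and after the (argmin-preserving) rescaling of the assignment costs so that $\sum_{ij}C_{ij}=O(\sqrt{m})$ we obtain $C_s=\tilde{O}(\sqrt{m})$ and therefore $C_s^2=\tilde{O}(m)$.

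Finally, since $P\subseteq[0,1]^{nm+m}$ is bounded by Assumption \ref{assm:bdd} and $\gamma-$AuxPD is initialized from a least-squares point satisfying $Ax_0=b$, the constants $M,M_s$ are polynomially bounded so that $\ln M+\ln M_s=\tilde{O}(1)$; in particular $n$ enters only through these logarithmic factors, which is the source of the claimed logarithmic-in-$n$ dependence. Substituting $D=1$, $C_s^2=\tilde{O}(m)$, and $\ln M+\ln M_s=\tilde{O}(1)$ into the rate yields $\tilde{O}(m/\epsilon^2)$. I expect the main obstacle to be the calibration of $\gamma_u$: accuracy is free because the slack penalty is constant on $P$, but the rate degrades with the cost magnitude $m\gamma$, so one must argue that $\gamma_u=\Theta(1/\sqrt{m})$ — the reciprocal of the Euclidean diameter, and a genuine tightening of the crude bound in Observation \ref{obsn1} — is simultaneously large enough to keep $C_s=\tilde{O}(\sqrt{m})$ and small enough to leave the discretization stable. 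Coupling this calibration with the total-unimodularity fact $D=1$ is what collapses the general determinant-dependent bound to $\tilde{O}(m/\epsilon^2)$.
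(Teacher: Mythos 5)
Your scaffolding---reduce to the discrete convergence guarantee of \cite{straszak2015natural} and then bound $D$, $C_s$, and the initialization logarithms specifically for \eqref{eq:mgreatern}---is reasonable, but your route through the perturbation is genuinely different from the paper's. The paper's proof is a short convex-analysis argument: it notes that Assumptions \ref{assm:feas} and \ref{assm:bdd} make the optimal solution set a \emph{bounded polyhedral multifunction}, invokes the Sticky Face lemma of Robinson \cite{robinson2018short} to conclude that the optimal face is invariant under every cost perturbation inside a ball $\mathcal{B}(c,r)$ of some radius $r>0$, and extracts from that lemma's proof the calibration $r\lesssim 1/M$ with $M=\Theta(\sqrt{m})$ the diameter of $P$; it never explicitly computes $D$ or $C_s$. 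You instead observe that $\mathbf{1}_m^\top s_m = m-n$ identically on $P$, so the $\gamma$-perturbation is \emph{exact} for every $\gamma>0$. For this particular LP that is a cleaner and strictly stronger statement than either Observation \ref{obsn1} (no gap $\delta$ to the second-best vertex is needed) or the sticky-face argument, and your total-unimodularity computation giving $D=1$ makes explicit a fact the paper only gestures at (``the sub-determinant of $A$ is provably small''). What the paper's route buys in exchange is generality: the sticky-face argument applies to any LP with bounded feasible set (it is reused for the $\ell_1$-SVM in the supplement), whereas your constancy identity is special to the matching constraints. It is also a pleasing consistency check that both arguments land on the same calibration $1/\gamma_u=\Theta(\sqrt{m})$, though for different reasons: in the paper $\gamma$ is constrained by \emph{accuracy} (the invariance radius is inverse to the diameter), while in your argument accuracy is free and $\gamma$ is constrained only by its $m\gamma$ contribution to $C_s$ in the \emph{rate}.

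The one step that would fail under scrutiny is the ``argmin-preserving rescaling of the assignment costs so that $\sum_{ij}C_{ij}=O(\sqrt{m})$.'' Note that the dynamics \eqref{eq:contpd} are invariant under a uniform positive rescaling $c\mapsto\alpha c$ (since $W\mapsto W/\alpha$, $p\mapsto \alpha p$, hence $q$ is unchanged), and so is the multiplicative criterion $c^\top x(k)\leq(1+\epsilon)\,opt$; if arbitrary real rescaling were admissible in the bound of \cite{straszak2015natural}, the $C_s^2$ factor could be driven to zero, so that bound must be read under a normalization (integral positive costs), and your rescaling is not available for free. If instead you rescale only the $C$-block relative to the fixed $\gamma=\Theta(1/\sqrt{m})$---which does preserve the argmin, by your own constancy observation---then the $(1+\epsilon)$-multiplicative guarantee holds for the rescaled-and-shifted objective $\alpha\,\mbox{tr}(CX^\top)+\gamma(m-n)$, and translating it back to $\mbox{tr}(CX^\top)$ inflates the effective accuracy parameter by a factor of order $\gamma(m-n)/(\alpha\cdot opt)$, which is polynomial in $m$ when $\alpha=\tilde{O}(\sqrt{m}/(nm))$; the claimed $\tilde{O}(m/\epsilon^2)$ then no longer follows without tracking that trade-off. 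To be fair, the paper's own proof is silent on $C_s$ altogether, so your accounting is more honest about where the $\tilde{O}(m)$ must come from; but as written, this step needs either an explicit boundedness/normalization assumption on $\|C\|_1$ or an $\epsilon$-bookkeeping argument showing the rescaling does not degrade the guarantee.
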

\begin{proof}(Sketch) To prove Theorem \ref{lem:bm}, we use a result from convex analysis called the {\em sticky face lemma} to show that for all small perturbations of $c$, the optimal solution set remains invariant. We can then simply estimate $\gamma_u$
 to be the largest acceptable perturbation (which may depend on $C,P$ but {\em not} on any combinatorial function of $P$ like extreme points/vertices). See Section \ref{app:thm2_pf} for details.
 \end{proof}

{\bf Verifying Theorem \ref{lem:bm}.} We construct random matching problems of size $n=5,m=50$ (used later in \S\ref{DMM})  with batch size $32$, where we randomly set elements of $C$ to be values in $[0,1]$. 
We compare our method with CVXPY-SCS and  a projected gradient descent algorithm in which the projection exploits the Dykstra’s algorithm (used by \cite{zeng2019dmm} in \S\ref{DMM}) (we denote it as PGD-Dykstra).

{\bf Evaluation Details.} We run $100$ random instances of matching problems for both our $\gamma-$AuxPD algorithm and PGD-Dykstra with different number of iterations. 
We report the objective value computed using the solution given by our $\gamma-$AuxPD solver/PGD-Dykstra/CVXPY-SCS. 
Our step size is $1$ and learning rate of PGD-Dykstra is set to $0.1$ (both used in \S\ref{DMM_exp}). For CVXPY-SCS, the number of iterations is determined by the solver itself for each problem and it gets $0.112$ objective with mean time $0.195$ (s). The results of $\gamma-$AuxPD and PGD-Dykstra are reported in Table \ref{table_PGD}.  Our $\gamma-$AuxPD algorithm achieves faster convergence and
better quality solutions.

\subsection{$\ell_1$-normalized Linear SVM using $\gamma-$AuxPD}
In the next testbed for $\gamma-$AuxPD, we solve a $\ell_1$-normalized linear SVM \cite{hess2015support} in the standard form of LP \eqref{lp_standard}. Below, $\widetilde{K}^{[i,j]}$ stands for  $K(x_i,x_j)(\alpha_{1j}-\alpha_{2j})$:

\begin{equation}
\label{l1_svm}
\begin{aligned}
&\min_{\alpha_1,\alpha_2,s,b_1,b_2,\xi} \quad  \sum_{i=1}^ns_i + C\sum_{i=1}^n(\xi_i + 2z_i) \\
& \textrm{s.t.} \quad  y_i\left(\sum_{j=1}^{n}y_j \widetilde{K}^{[i,j]} +(b_1-b_2)\right)+\xi_i-Mz_i-l_i =  1,\\
& \sum_{j=1}^ny_j \widetilde{K}^{[i,j]} - s_i + p_i = 0, \quad
\sum_{j=1}^ny_j \widetilde{K}^{[i,j]} + s_i - q_i = 0,\\
&z_i+r_i=1, \quad 
\alpha_1,\alpha_2,s,b_1,b_2,\xi, z_i, l_i,p_i,q_i,r_i, \geq 0 \\
& \forall i=1,2,\cdots,n.
\end{aligned}
\end{equation}

Like Thm. \ref{lem:bm}, we can show a convergence result for $\ell_1$-SVM \eqref{l1_svm} (see Section \ref{app:l1_svm}). 

{\bf Verifying convergence of $\gamma-$AuxPD for $\ell_1$-SVM \eqref{l1_svm}.} We compare our method with the recent CVXPY-SCS solver \cite{agrawal2019differentiable} which can also solve LPs in a differentiable way. We constructed some simple examples to check whether CVXPY-SCS and our $\gamma-$AuxPD solver works for SVMs (e.g., binary classification where training samples of different class come from Gaussian distribution with different mean). Both $\gamma-$AuxPD and CVXPY-SCS give correct classification results. We will further show in \S\ref{Meta} that when used in training, $\gamma-$AuxPD achieves better performance and faster training time than CVXPY-SCS.

\section{Differentiable LPs in Computer Vision}
\label{Applications}

\begin{figure*}[!t]
\centering
\includegraphics[width=0.7\textwidth]{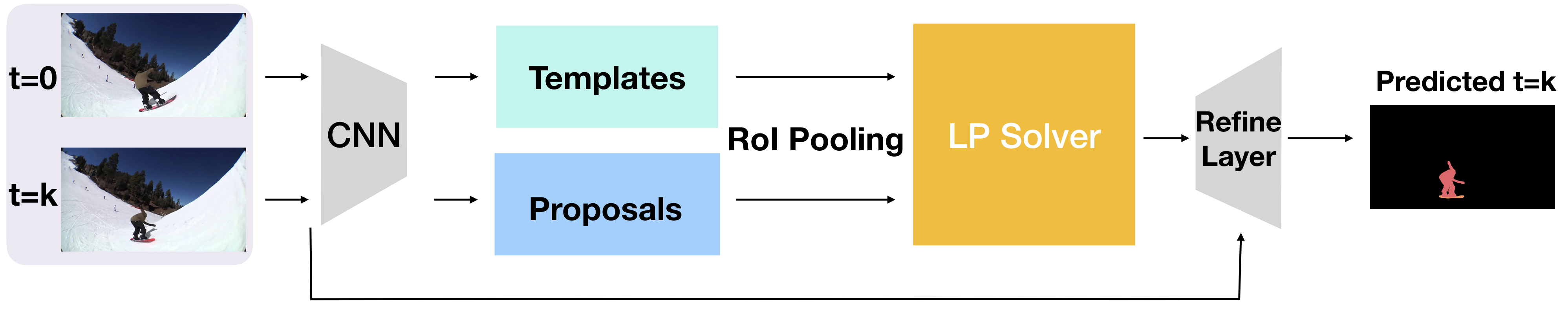} 
\caption{\small Architecture of DMM \cite{zeng2019dmm}: The yellow box is where the linear program is solved. In this application the linear program is a bipartite matching problem. }
\label{fig_dmm}
\end{figure*}

\begin{figure*}[!t]
\centering
\includegraphics[width=0.7\textwidth]{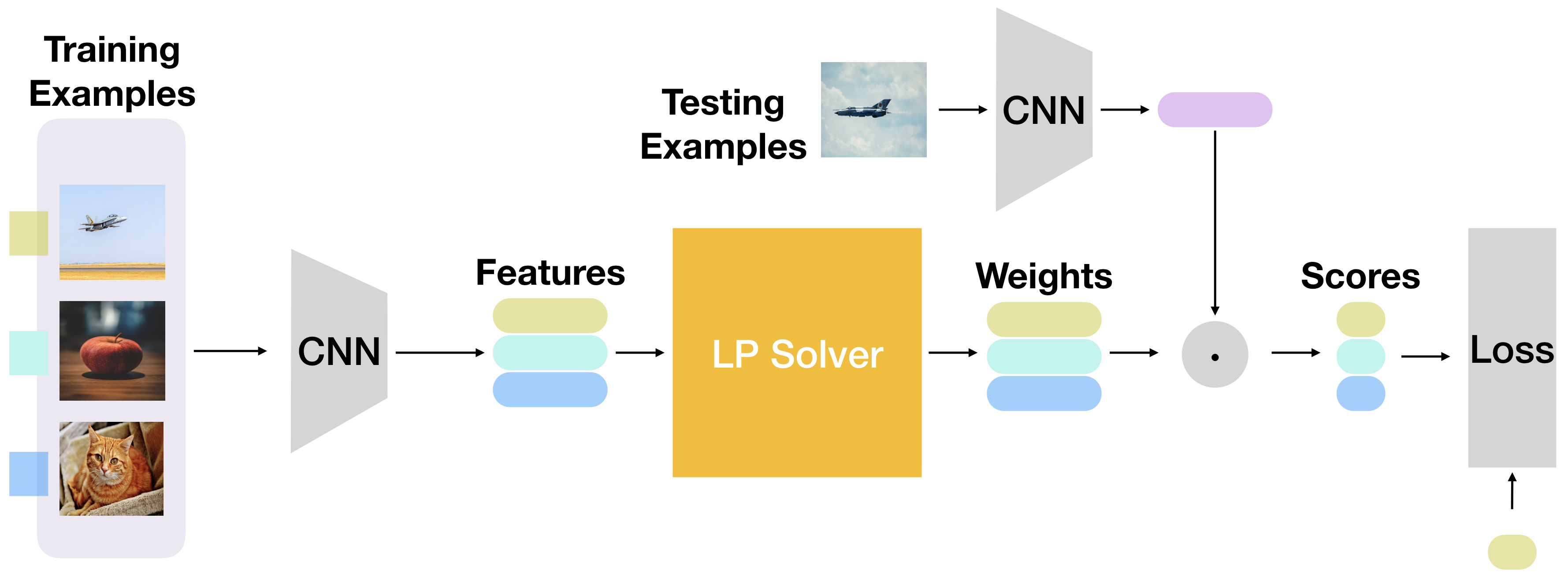} 
\caption{\small Architecture of Meta-learning \cite{lee2019meta}: The yellow box is where the linear program is solved. In this application, the linear program is a linear SVM. }
\label{fig_meta}
\end{figure*}

We now demonstrate the versatility of our $\gamma-$AuxPD layer 
in particular scenarios in computer vision. Our goal here is to 
show that while the proposed procedure is easy, 
it can indeed be used in a plug and play manner in fairly 
different settings, where the current alternative 
is either to design, implement and debug a specialized 
sub-routine \cite{zeng2019dmm} or to utilize 
more general-purpose schemes when a simpler one would suffice 
(solving a QP instead of a LP) as in \cite{lee2019meta}. We try to 
keep the changes/modifications to the original pipeline where 
our LP solver is deployed as minimal as possible, so ideally, 
we should expect  that there are no 
major fluctuations in the overall accuracy profile.  


\subsection{Differentiable Mask-Matching in Videos}
\label{DMM}

We review the key task from \cite{zeng2019dmm} to introduce the differentiable mask-matching network for video object segmentation, and how/why it involves a LP solution. The overall architecture is in Fig. \ref{fig_dmm}.

{\bf Problem Formulation.}
Given a video with $T$ frames as well as the mask templates in the first frame, the goal is to obtain a segmentation of the same set of instances in all of remaining frames. \cite{zeng2019dmm} shows that differentiable matching between the templates and the bounding boxes proposed by the detector achieves superior performance over previous methods.

{\bf LP instance.} 
The goal is to use the cost matrix and solve a matching problem. 
Recall that 
minimum-cost bipartite matching can be formulated as a integer linear program (ILP) and can be relaxed to a LP, given by the formulation in standard form stated in \eqref{eq:mgreatern} (identical to the ILP and LP in 
\cite{zeng2019dmm}). 
The number of proposals $m$ is much larger than the number of templates $n$ and so one would ask that $X^T\boldsymbol{1_n}\leq \boldsymbol{1}_m$ instead of $X^T\boldsymbol{1_n} = \boldsymbol{1}_m$. 
%
%
 
{\bf Solver.} In \cite{zeng2019dmm}, the authors use a specialized 
projected gradient descent algorithm with a cyclic constraint projection 
method (known as \textit{Dykstra's algorithm}) to solve the LP. The constraints 
in this LP are simple enough that calculating the projections is not complicated 
although the convergence rate is {\bf not known}.
We can directly replace their solver  with $\gamma-$AuxPD in Alg. \ref{alg} to solve 
the problem, also in a differentiable way. Once the solution is obtained, 
\cite{zeng2019dmm} uses a mask refinement module which we also use to 
ensure consistency between the pipelines. 

\subsubsection{Experiments on Youtube-VOS.}
\label{DMM_exp}





{\bf Parameter settings.}
The projection gradient descent solver in \cite{zeng2019dmm} has three parameters to tune: number of gradient steps, number of projections, and learning rate. We use $N_{grad}=40,N_{proj}=5,lr=0.1$ as in their paper to reproduce their results. For $\gamma-$AuxPD layer, the choice is simple: step size $h=1$ and $K=10$ iterations work well for both two experiments and the other tests we performed.
 From Table \ref{table_PGD} we can see that the PGD-Dykstra solver from \cite{zeng2019dmm} is faster and more tailormade for this application than CVXPY-SCS thus we only compare with the PGD-Dykstra solver for this application. 

\begin{table}[!bt]
{

\resizebox{.99\columnwidth}{!}{
\centering
\begin{tabular}{l  | c c c c c c}
\hline
 & $\mathcal{J}_m$  &$\mathcal{J}_r$  &$\mathcal{J}_d$  &$\mathcal{F}_m$  &$\mathcal{F}_r$ &$\mathcal{F}_d$  \\
\hline
DMM-Net \cite{zeng2019dmm}    & $63.4$  &  $72.7$  &  $9.3$ & $77.3$   & $84.9$     & $10.5$     \\
$\gamma-$AuxPD layer   & $63.4$  &  $72.2$  &  $9.2$ & $77.3$   & $85.3$     & $10.4$    \\
\hline

\end{tabular}
}
\caption{\label{table_youtube}Results on Youtube-VOS train-val split. Subscripts $m,r,d$ stand for mean, recall, and decay respectively.}
}

\end{table}

\textbf{How do different solvers compare on Youtube-VOS?}
Our final results are shown in Table \ref{table_youtube}. Our solver works well 
and since the workflow is near identical to \cite{zeng2019dmm}, we achieve comparable results with \cite{zeng2019dmm} while achieving small benefits in inference time. We notice that although our solver performs better
for a simulated matching problems; 
since the matching problem here is small and the cost matrix learned by the feature extractor is already good (so easy to solve), the runtime behavior is similar. 
Nonetheless, it shows that the general-purpose solver can be directly plugged in and offers 
performance which is as good as a {\em specialized solution} in \cite{zeng2019dmm} that exploits the properties 
of the particular constraint set.


\subsection{Meta-learning for Few-shot Learning}
\label{Meta}
We  briefly  review  the  key  task  from  \cite{lee2019meta}  to  introduce the few-shot learning task using a meta-learning approach, and how it involves getting a solution to a LP. Due to
limited space, we refer readers to \cite{lee2019meta} for more  details of the meta-learning for few-shot learning task. 
 The overall architecture is in Fig. \ref{fig_meta}.

\textbf{Problem Formulation.}
Given a training set $D^{train}=\{(\boldsymbol{x}_t,y_t)\}_{t=1}^T$, in this problem, 
the goal of the base learner $\mathcal{A}$ is to estimate parameters $\theta$ of the predictor $y=f(\boldsymbol{x};\theta)$ so that it generalizes well to the unseen test set $D^{test} = \{(\boldsymbol{x}_t,y_t)\}_{t=1}^Q$. The meta learner seeks to learn an embedding model $\phi$ that minimizes the generalization error across tasks given a base learner $\mathcal{A}$.

\textbf{LP instance.}
There are several requirements for the base learners. First, the evaluation needs to be very efficient since a base learner needs to be solved in every iteration within the \textit{meta-learning} procedure. Second, we need to be able to estimate 
and backpropagate the gradient from the solution of the base learner back to the embedding model $f_\phi$, which means that the solver for the base learner needs to be differentiable. 
In \cite{lee2019meta}, the authors use a multi-class linear support vector machine (SVM) with an $\ell_2$ norm on 
the weights \cite{crammer2001algorithmic}. 
Instead, to instantiate an LP, we use a $\ell_1$ normalized SVM proposed by \cite{hess2015support}. The optimization model for this SVM in a standard form is shown in \eqref{l1_svm}.
%
%
This is a binary SVM model, on top of which we run 
$k \choose 2$ pairwise SVMs to obtain the solution 
where $k$ is the number of classes in the task. 

{\bf Solver.} In \cite{lee2019meta}, the authors use OptNet. 
Note that the number of parameters is only related to the number of training examples and the number of classes, which is often much smaller than the dimensionality of the features for few-shot learning. Since feature selection 
seems more appropriate here, 
we may directly replace OptNet with our $\gamma-$AuxPD layer to solve the $\ell_1$-SVM efficiently. Our baseline method
is CVXPY-SCS \cite{agrawal2019differentiable}. The implementation of Optnet \cite{amos2017optnet} does not directly support solving LPs since it requires a positive definite quadratic term. Still, to
test its ability of solving LPs, we add a diagonal matrix with a small value ($0.1$,
since diagonal value smaller than $0.1$ leads to numerical errors in our experiment) as
the quadratic term (can be thought of as a regularization term). 

\begin{table*}[t]

\centering
\smallskip
\begin{tabular}{l  c c c c}
\hline
 &  \multicolumn{2}{c}{ CIFAR-FS 5-way} & \multicolumn{2}{c}{ FC100 5-way}\\
  LP Solver                        & 1-shot     & 5-shot     & 1-shot & 5-shot\\
\hline

MetaOptNet-CVXPY-SCS             & $70.2\pm 0.7 $  &  $83.6 \pm 0.5 $  &  $ 38.1 \pm 0.6 $   & $51.7 \pm 0.6$  \\
MetaOptNet-Optnet (with regularization) & $69.9\pm 0.7$ & $83.9\pm0.5$ & $37.3\pm 0.5$ & $52.2\pm 0.5$ \\
MetaOptNet-$\gamma-$AuxPD (Ours)             & $71.4 \pm 0.7 $  &  $84.3 \pm 0.5 $  &  $38.2 \pm 0.5 $   & $54.2 \pm 0.5$  \\
\hline$ $
\end{tabular}
\caption{Results on CIFAR-FS and FC100. In $K$-way, $N$-shot few shot learning,  $K$ is the number of classes and $N$ is the number of training examples per class. Performance of more baseline methods is in appendix Table \ref{table_cifar_app}. }
\label{table_cifar}
\end{table*}

\subsubsection{Experiments on CIFAR-FS and FC100.}

\textbf{Datasets.}
We follow the code from \cite{lee2019meta} to conduct the experiments on {\bf CIFAR-FS} and {\bf FC100}. Other training details and dataset information are in the supplement.

\textbf{How do different solvers compare on CIFAR-FS and FC100?}
The results on CIFAR-FS and FC100 are shown in Table \ref{table_cifar}. Using the $\ell_1$ normalized SVM, our solver achieves better performance than CVXPY-SCS \cite{agrawal2019differentiable} and Optnet (with a small quadratic term as regularization) on both datasets and both the $1$-shot and $5$-shot setting. Expectedly, since the pipeline 
is very similar to \cite{lee2019meta}, we achieve a similar 
performance as reported there, although their results 
were obtained through a different solver. 
This suggests that our simpler solver works at least as well, 
and no other modifications were needed. 
Importantly, during the training phase, our solver achieves {\bf $4\times$ improvement in runtime} compared with CVXPY-SCS (baseline which can also solve the $\ell_1$-SVM). 
\cite{lee2019meta} also reported the performance 
of solving $\ell_2$ normalized SVM. The choice of $\ell_1$ versus $\ell_2$ often depends on specific application settings.

We also compare the time spent on solving a batch of LP problems with $n=92,m=40,p=122$ (same size used in the experiment), where $n$ is number of variables, $m$ is number of equality constraints and $p$ is the number of inequality constraints in the original problem form. 
Table \ref{runtime_exp} shows that our implementation is efficient for batch processing on GPU, which is crucial for many modern AI applications.  We also performed a GPU memory consumption comparison with a batch size of $32$: our solver needs
  $913$MB GPU memory, CVXPY-SCS needs $813$MB and Optnet needs $935$MB which are mostly comparable. 

\begin{table}[!bt]
\center
\begin{tabular}{l  | c c c c}
\hline
 batch size & $8$  &$32$  &$128$   \\
\hline
  CVXPY-SCS  & $32.3$  &  $122.7$  &  $455.2$   \\
\hline
  Optnet  & $42.4$  &  $88.1$  &  $243.7$   \\
\hline
  $\gamma-$AuxPD (Ours)   & $24.0$  &  $25.1$  &  $25.8$   \\
\hline
\end{tabular}
\caption{Time (ms) spent on solving a batch of LP problems. The time reported here for CVXPY-SCS does not include that spent on constructing the canonicalization mapping.}
\label{runtime_exp}
\end{table}


\textbf{How does LP solver influence the global convergence of the task?}
To understand how the quality of LP solver influences the global convergence of the learning task
(i.e., where the LP is being used), we conduct a simple experiment. This addresses the
question of whether a good LP solver is really needed?
Here, we add a random Gaussian noise with zero mean and small variance to the
solution of LP solver (to emulate results from a worse solver)
and observe the convergence and final accuracy in the context of the task.
We can see in Table \ref{noise_exp} that the quality of LP solution has a clear influence on the overall performance of the training (few-shot learning in this example).
\begin{table}[!bt]

\resizebox{.99\columnwidth}{!}{
\centering
\begin{tabular}{l  | c c c c c c}
\hline
 Variance of  noise & $0$  &$0.01$  &$0.03$  &$0.05$  &$0.1$ \\
\hline
Test accuracy    & $71.4$  &  $70.1$  &  $69.1$ & $68.2$   & $61.91$   \\
\hline
\end{tabular}

}
\caption{Experiment on CIFAR-FS 5-way 1-shot setting where zero mean random Gaussian noise is added to the solution of $\gamma-$AuxPD solver.}
\label{noise_exp}
\end{table}

\section{Discussion}
\subsection{Other Potential Applications}
\label{more_app}
Linear programming appears frequently in machine learning/vision, and $\gamma-$AuxPD
can be potentially applied fairly directly. We cover a few recent examples which are interesting since they are not often solved as a LP.  
\medskip

{\bf Differentiable Calibration.}
Confidence calibration is important for many applications, e.g., self-driving cars \cite{bojarski2016end} and medical diagnosis \cite{liang2020improved}. However, it is
known that SVMs and deep neural networks give a poor estimate of the confidence to their outputs.
In general, calibration is used only as a post-procedure \cite{guo2017calibration}. Observe that some calibration methods can be written or relaxed in the form of a LP.
For example, Isotonic regression \cite{guo2017calibration},
fits a piecewise non-decreasing function to transform uncalibrated outputs.
By using a $\ell_1$ loss, Isotonic regression can be written as a linear program.
Therefore $\gamma-$AuxPD layer can solve it differentiably within an end to end network
during training, which may be a desirable and lead to better calibration.
\medskip

{\bf Differentiable Calculation of Wasserstein Distance (WD).}
WD is widely used in generative adversarial models \cite{arjovsky2017wasserstein} as well as
the analysis of shapes/point clouds \cite{trillos2017gromov}.
An entropy regularized LP formulation of WD can be solved using the Sinkhorn algorithm.
Results in \cite{amari2019information} suggest that Sinkhorn may be suboptimal since the
limit of the sequence generated by the Sinkhorn algorithm may not coincide with the minimizer
of the unregularized WD. Interestingly, we can apply Thm \ref{lem:bm} (or Thm. 1 in \cite{straszak2015natural})
to conclude that PD (i) is asymptotically {\em exact}; and (ii) matches  the convergence rate of
the Sinkhorn algorithm. For training deep networks, this means that we can 
obtain unbiased gradients using $\gamma-$AuxPD layers which may lead to faster training. 
 \medskip

{\bf Differentiable Hierarchical Clustering.} Hierarchical clustering algorithms are often used in segmentation based vision tasks, see \cite{arbelaez2010contour}. It is well known that an approximate hierarchical clustering can be computed by first rounding the optimal solution of  a LP relaxation, see 
\cite{charikar2017approximate}. Observe that the LP formulation of the sparsest cut problem has more constraints than decision variables owing to the ultrametric requirement of the decision variables. Hence, $\gamma-$AuxPD may be employed to approximately solve the Hierarchical Clustering problem, thus enabling us to differentiate through clustering based objective functions in end-to-end deep learning frameworks. Until recently, the EM-style clustering was a bottleneck 
\subsection{Implicit Differentiation of PD} \label{app:imp_pd}
In this section we show how to get implicit differentiation of $c$. Then $A$ and $b$ follow similarly. Let the updating direction $P(x)=W(A^{T}L^{-1}b-c)$, where $W$ is the diagonal matrix with entries $\frac{x_i}{c_i}$, denoted as $\text{diag}(x\oslash c)$, and $L=AWA^{T}$, where $\oslash$ is element-wise division. So we can rewrite $P$ at the optimal solution $x^{*}$ as 
\begin{equation}
	P(x^*)=\text{diag}(x^*\oslash c)(A^{T}(A\cdot \text{diag}(x^*\oslash c)A^{T})^{-1}b-c)
\label{P_func}
\end{equation}

As a joint function of $c,x^*$, we differentiate both the sides of \eqref{P_func} with respect to $c$ as,\begin{align}
\frac{\partial P}{\partial c} + \frac{\partial P}{\partial x^*} \frac{\partial x^*}{\partial c} =0 \implies  \frac{\partial x^*}{\partial c} = -\left(\frac{\partial P}{\partial x^*} \right)^{-1} \frac{\partial P}{\partial c}.\label{eq:imp_der}
\end{align}

Denote $t_0=x\oslash c$, $T_1=(A\text{diag}(t_0)A^T)^{-1}$ and $t_2=c\odot c$. $\frac{\partial P}{\partial x^*}$ and $\frac{\partial P}{\partial c}$ can be computed analytically as follows:
\begin{align*}
	&\frac{\partial P}{\partial x^*} = \text{diag}((A^{T} T_1b -c)\oslash c) - \\
	&\text{diag}(t_0)A^TT_1 A \text{diag}(b^T(A \text{diag}(t_0) A^T)^{-1}A \oslash c^T)
\end{align*}
	and
\begin{align*}
	&\frac{\partial P}{\partial c} = \text{diag}(t_0) A^T T_1 A \text{diag}(x^{*T} \odot (b^T(A \text{diag}(t_0)A^T)^{-1}A)\\
	& \oslash t_2^T)  - \text{diag}(t_0) - \text{diag}(x\odot (A^TT_1 b -c)\oslash t_2)
\end{align*}


For computational purposes, without loss of generality,  we can assume that the norm of the gradient to be some fixed value, say one. This is because, for training networks with
PD layer using first order methods, scale or magnitude of the gradients can simply  absorbed in the learning rate (interpreted as a hyperparameter)
and tuned using statistical techniques such as cross validation. Hence, in order to evaluate the quality of the implicit gradient calculated from the above equations, we ignore the scale and use similarity based measures.
To this end, we used our bipartite matching problem as a testbed and
compared explicit gradient (calculated by unrolling the update rules) and
implicit gradient (using the formula above).
A high cosine value between explicit gradient and implicit gradient indicates that
the two gradients are mostly in the same direction.
After running $100$ matching problems with different random cost matrices,
we find that in all cases (as shown in Fig. \ref{gradient_pca}), the $P(x)$ becomes very small (with norm less than $0.01$)
which means that the quality of the final solution from our solver is good.
But we find that in a fair number of cases, the cosine
values are less than $0.99$. We suspect that this is due to the inverse operation in \eqref{eq:imp_der} -- note that both the terms in \eqref{eq:imp_der} are matrices, so after the forward pass, 
we have to solve $n$ linear systems in order to compute the gradient. Indeed, this can be tricky in practice -- the
local geometry around the optimal solution $x^*$ may not be ideal (for example, Hessian at $x^*$ with a high condition number) which can then introduce floating point errors that can affect overall gradient computation significantly. 
Fortunately, we find that our algorithm converges in less than 10 iterations in our experiments, so it is extremely 
convenient to do unrolled gradient computation which tends to perform better with the overall training of the network. 
The above discussion also provides reasons why in Table 3, our
solver performs slightly better than CVXPY and Optnet: both of which are
based on implicit gradients.


\begin{figure}[t]
\centering
\begin{subfigure}[t]{.49\columnwidth}
  \centering
  \includegraphics[width=.95\linewidth]{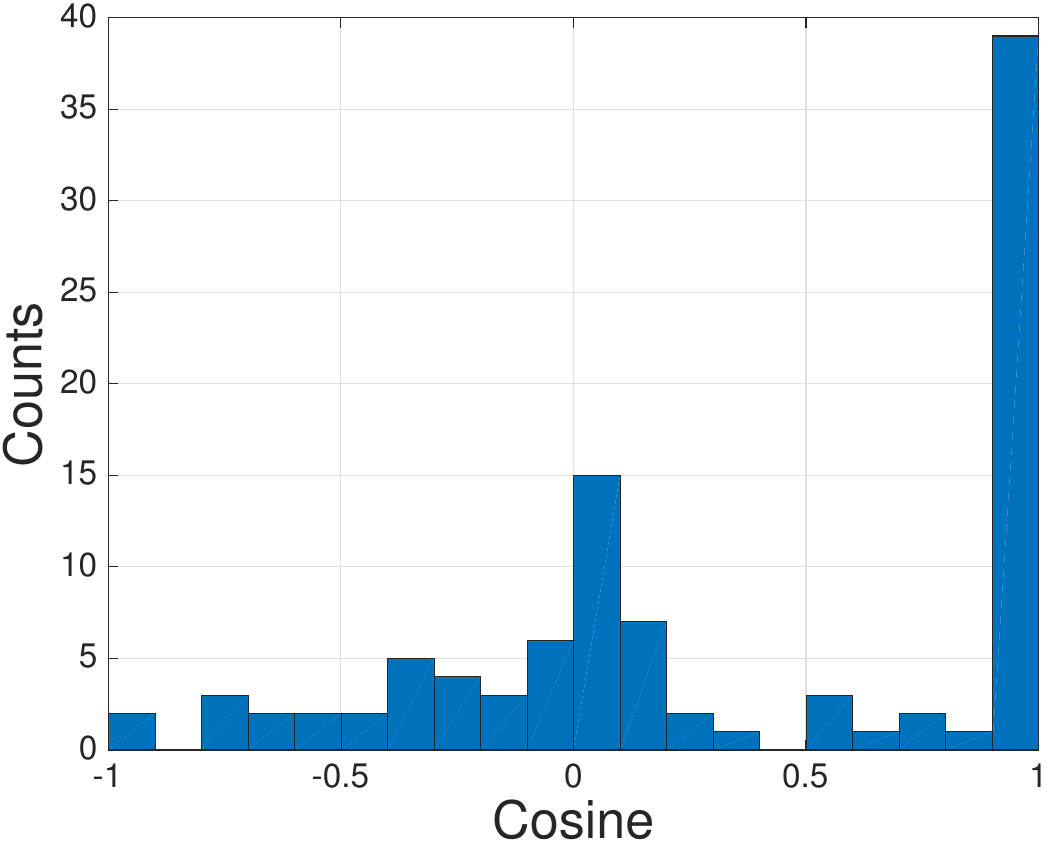}
  \caption{Histogram of the cosine values between implicit and explicit gradient on $100$ random constructed matching problems.}
  \label{gradient_pca}
\end{subfigure}
\begin{subfigure}[t]{.49\columnwidth}
  \centering
  \includegraphics[width=.95\linewidth]{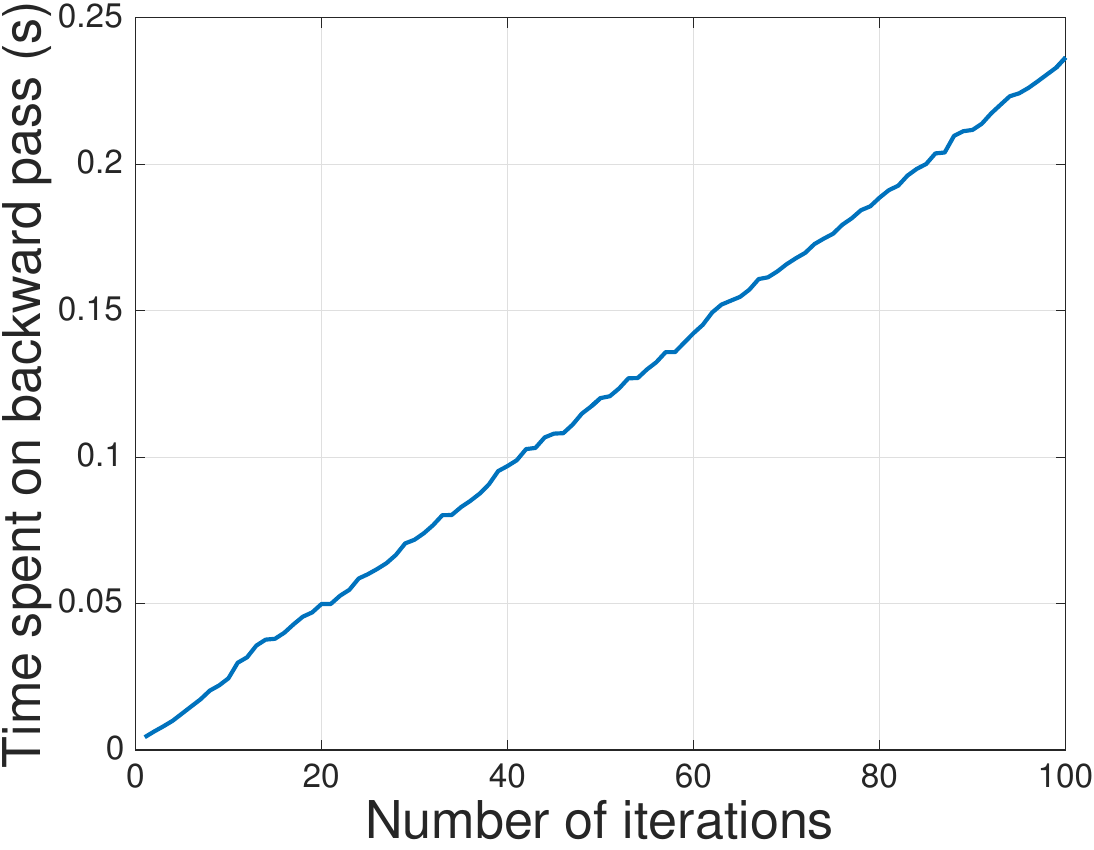}
  \caption{Time cost of explicit differentiation. The time cost increases linearly with the number of iterations.}
  \label{backward_time}
\end{subfigure}
\caption{Comparison between explicit gradient and implicit gradient.}
\label{fig_comparison_implicit_explicit}
\end{figure}



Finally, we note that implicit differentiation may have potential benefits in certain scenarios.
One benefit of implicit differentiation is that the time spent on the backward pass
(gradient backpropagation) is not related to the number of iterations that
our Physarum solver uses in the forward pass.
From this perspective, when is implicit differentiation preferable compared with explicit differentiation (unrolling)? Consider the bipartite matching problem ($m=10,n=50$) as a LP example, we plot the time spent on explicit differentiation as a function of the number of iterations that our Physarum solver uses in forward pass
time in Fig. \ref{backward_time}.
Leaving aside the numerical issues discussed above, implicit differentiation costs $0.028s$,
which is roughly equal to the backward pass time of explicit differentiation for $10-15$ iterations.
This means that when the iterations needed in the forward pass is larger than $10-15$ iterations,
implicit differentiation may be preferable in terms of time for the backward pass, in
addition to potential memory savings an unrolled scheme would need for a large number of
iterations. 
\begin{table*}[!ht]

\centering
\resizebox{2.1\columnwidth}{!}{
\begin{tabular}{l  c c c c}
\hline
 &  \multicolumn{2}{c}{ CIFAR-FS 5-way} & \multicolumn{2}{c}{ FC100 5-way}\\
  LP Solver                        & 1-shot     & 5-shot     & 1-shot & 5-shot\\
\hline
MAML  \cite{finn2017model}                            & $58.9 \pm 1.9 $  &  $71.5 \pm 1.0 $  &  $- $              & $- $            \\
Prototypical Networks  \cite{snell2017prototypical}          & $55.5 \pm 0.7 $  &  $72.0 \pm 0.6 $  &  $35.3 \pm 0.6 $   & $48.6 \pm 0.6 $ \\
Relation Networks  \cite{sung2018learning}             & $55.0 \pm 1.0 $  &  $69.3 \pm 0.8 $  &  $- $              & $- $            \\
R2D2        \cite{bertinetto2018meta}                   & $65.3 \pm 0.2 $  &  $79.4 \pm 0.1 $  &  $- $              & $- $            \\
TADAM     \cite{oreshkin2018tadam}                        & $- $             &  $- $             &  $40.1 \pm 0.4 $   & $56.1 \pm 0.4 $ \\
ProtoNets(with backbone in \cite{lee2019meta})  \\
  \cite{snell2017prototypical}                      & $72.2 \pm 0.7 $  &  $83.5 \pm 0.5 $  &  $37.5 \pm 0.6 $   & $52.5 \pm 0.6 $ \\

MetaOptNet-RR \cite{lee2019meta}               & $72.6 \pm 0.7 $  &  $84.3 \pm 0.5 $  &  $40.5 \pm 0.6 $   & $55.3 \pm 0.6$  \\
MetaOptNet-SVM \cite{lee2019meta}             & $72.0 \pm 0.7 $  &  $84.2 \pm 0.5 $  &  $41.1 \pm 0.6 $   & $55.5 \pm 0.6$  \\

MetaOptNet-CVXPY-SCS             & $70.2\pm 0.7 $  &  $83.6 \pm 0.5 $  &  $ 38.1 \pm 0.6 $   & $51.7 \pm 0.6$  \\
MetaOptNet-Optnet (with regularization) & $69.9\pm 0.7$ & $83.9\pm0.5$ & $37.3\pm 0.5$ & $52.2\pm 0.5$ \\
MetaOptNet-$\gamma-$AuxPD (Ours)             & $71.4 \pm 0.7 $  &  $84.3 \pm 0.5 $  &  $38.2 \pm 0.5 $   & $54.2 \pm 0.5$  \\
\hline$ $
\end{tabular}
}
\caption{More baseline results on CIFAR-FS and FC100. We achieve comparable performance using $\ell_1$-SVM with  \cite{lee2019meta} which uses $\ell_2$-SVM and surpasses previous baseline methods. The choice between $\ell_1$ and $\ell_2$ often depends on the specific application considered, and $\ell_1$ is often faster to solve than $\ell_2$. Using the same $\ell_1$-SVM, our solver achieves better performance than CVXPY-SCS and  Optnet while being faster in terms of training time.}
\label{table_cifar_app}
\end{table*}

\section{Conclusions}
This paper describes 
how Physarum dynamics based ideas \cite{straszak2015natural,johannson2012slime} can 
be used to obtain a differentiable LP solver that 
can be easily integrated within 
various deep neural networks if the task involves obtaining a 
solution to a LP. 
Outside of the tasks shown in our experiments, there are many
other use cases including differentiable isotonic regression for calibration,
differentiable calculation of Wasserstein Distance, differentiable tracking, and so on. 
The algorithm,  $\gamma-$AuxPD, 
converges quickly without requiring 
a feasible solution as an initialization, and 
is easy to implement/integrate. 
Experiments demonstrate that 
when we preserve existing pipelines for 
video object segmentation and separately for meta-learning for few-shot learning, 
with substituting in our simple $\gamma-$AuxPD layer, we obtain comparable performance as more specialized schemes.  
  As briefly discussed earlier, recent results that utilize
  implicit differentiation to solve combinatorial problems  \cite{vlastelica2019differentiation}
  or allow using blackbox solvers for an optimization problem during DNN training \cite{berthet2020learning,ferber2020mipaal}, 
  are indeed promising developments because any state of the art solver can be utilized. 
  However, current LP solvers are often implemented to be CPU-intensive and
  suffer from overhead compared with solvers that are
  entirely implemented on the GPU. This is beneficial for DNN training.
Our code is available at \text{https://github.com/zihangm/Physarum-Differentiable-LP-Layer} and integration with CVXPY is ongoing,
which will complement functionality offered by tools like OptNet and CVXPY-SCS.

\section{Acknowledgements}
     We would like to thank one
          of the anonymous AAAI 2021 reviewers who apart from 
          suggestions also provided an 
          alternative implementation that improved the performance of
          CVXPY-SCS in our experiments. This helped strengthen our
          evaluations.  
          We thank Damian Straszak and Nisheeth Vishnoi for helpful clarifications
          regarding the convergence of continuous time physarum dynamics, and Yingxin Jia
          for interfacing our solver with a feature matching 
          problem studied in computer vision (https://github.com/HeatherJiaZG/SuperGlue-pytorch).
          This research was supported
          in part by UW CPCP AI117924, NSF CCF \#1918211, NIH R01 AG062336
          and R01 AG059312, NSF CAREER award RI\#1252725 
          and American Family Insurance.
          Sathya Ravi was also supported by UIC-ICR start-up funds.

\section{Appendix}

\subsection{Proof of Theorem 2 }\label{app:thm2_pf}
\begin{proof}
It is sufficient to show that $\gamma_u=\Theta(\sqrt{m+n})$. But showing such a constant exists is equivalent to showing that there is a neighborhood $\mathcal{N}=\mathcal{B}(c,r)$ around the cost vector or objective function $c$ of radius $r>0$ such that the optimal values of any two cost $c_1,c_2\in \mathcal{N}$ coincide i.e., there exists $x^*\in P$ such that $c_1^Tx^*=c_2^Tx^*$. To see that this is sufficient for our purposes, note that we can add small but positive constant to all the coordinates in $c$ that correspond to auxiliary/slack variables. Now, it is easy to see that Assumptions 1 and 2 guarantee that the optimal solution set is a {\em bounded} polyhedral multifunction.  Hence, we can use the Sticky Face lemma \cite{robinson2018short} to guarantee that such a nonzero $r$ exists. To conclude, we observe from the proof of the Sticky Face lemma,  that $r$ can be upper bounded by $1/M$, where $M$ corresponds to the the diameter of $P$ which is  $\Theta(\sqrt{m})$.
\end{proof}

\subsection{Proof of Convergence of $\ell_1$-SVM}\label{app:l1_svm}
Since the SVM formulation is always feasible, by the separating hyperplane theorem, there exists a $\kappa>0$ such that the when we add cost of $\kappa$ to each coordinate of $\alpha_1,\alpha_2,b_1,b_2,p,q,r$, then the (cost) perturbed linear program and the original LP ((6) in the main paper), have the same optimal solution. Then, it is easy to see that $C_s$ of this perturbed problem is quadratic in $n,C$ and $\kappa$. By scaling the data points, we can assume that \begin{align}
    \|x_i\|_2\leq 1.\label{eq:assmball}
\end{align} We now bound the magnitude of sub-determinant $D$ of the perturbed SVM LP. First note that the slack variables are diagonal, hence, the contribution to the determinant will be at most $1$. Hence, to bound $D$, we need to bound the determinant of the kernel matrix $K(X,X)$. Using Fischer's inequality \cite{thompson_1961}, we have that, \begin{align}
    D\leq \left(K\left(x_i,x_i\right) \right)^n.\label{eq:detsvm}
\end{align}
For a linear kernel, we have that, $D=\|x_i\|^n\leq 1$ (by assumption \eqref{eq:assmball}). For a Gaussian kernel scale $\sigma$, we have that, $D=O(\sigma)$ with high probability. We can easily extend this to any bounded kernel $K$.

More baseline results on the meta-learning experiments are shown in Table \ref{table_cifar_app}.

\clearpage
\bibliography{Main}
\end{document}